\documentclass[11pt]{article}

\usepackage[preprint]{acl}

\usepackage{times}
\usepackage{latexsym}

\usepackage[T1]{fontenc}

\usepackage[utf8]{inputenc}

\usepackage{microtype}

\usepackage{inconsolata}

\usepackage{graphicx}
\usepackage{subfigure}
\usepackage{booktabs} 
\usepackage[table]{xcolor}
\usepackage{algorithm}
\usepackage{algorithmic}
\usepackage{subcaption,amsfonts,dcolumn}
\usepackage{etoc}
\etocdepthtag.toc{mtchapter}
\etocsettagdepth{mtchapter}{subsubsection}
\etocsettagdepth{mtappendix}{none}
\usepackage{multirow}
\usepackage{amsmath}
\usepackage{amssymb}
\usepackage{mathtools}
\usepackage{amsthm}
\usepackage{enumitem}
\usepackage[capitalize,noabbrev]{cleveref}

\newtheorem{thm}{Theorem}
\newtheorem{prop}{Proposition}

\newtheorem{remark}{Remark}

\usepackage[textsize=tiny]{todonotes}

\usepackage{xspace}

\newcommand{\methodname}{Latent-Condensed Attention\xspace}
\newcommand{\sexyname}{LCA\xspace}

\def\ie{\mbox{\textit{i.e.}}}

\definecolor{LightBlue}{rgb}{0.925,0.957,1}

\def\mytitle{Latent-Condensed Transformer for Efficient Long Context Modeling}

\def\revised{\textcolor{black}}

\title{\mytitle}

\author{Zeng You$^{1,2}$\thanks{Equal Contribution.}, Yaofo Chen$^{1}$\footnotemark[1], Qiuwu Chen$^{3}$\footnotemark[1], Ying Sun$^{3}$\footnotemark[1], \\
\textbf{Shuhai Zhang$^{1}$, Yingjian Li$^{2}$, Yaowei Wang$^{2,4}$\footnotemark[2], Mingkui Tan$^{1,5}\thanks{Corresponding Authors}$} \\ 
  $^1$South China University of Technology \\
  $^2$Pengcheng Laboratory \\
  $^3$AIGCode \\
  $^4$Harbin Institute of Technology \\
  $^5$Pazhou Laboratory
  }

\begin{document}
\maketitle
\begin{abstract}
Large language models (LLMs) face significant challenges in processing long contexts due to the linear growth of the key-value (KV) cache and quadratic complexity of self-attention.
Existing approaches address these bottlenecks separately: Multi-head Latent Attention (MLA) reduces the KV cache by projecting tokens into a low-dimensional latent space, while sparse attention reduces computation.
However, sparse methods cannot operate natively on MLA's compressed latent structure, missing opportunities for joint optimization.
In this paper, we propose \methodname (\sexyname), which directly condenses context within MLA's latent space, where the representation is disentangled into semantic latent vectors and positional keys. 
\sexyname separately aggregates semantic vectors via query-aware pooling and preserves positional keys via anchor selection. This approach jointly reduces both computational cost and KV cache without adding parameters. 
\revised{Beyond MLA, \sexyname's design is architecture-agnostic and readily extends to other attention mechanisms such as GQA.}
Theoretically, we prove a length-independent error bound. Experiments show \sexyname achieves up to \textbf{2.5$\times$} prefilling speedup and \textbf{90\%} KV cache reduction at 128K context while maintaining competitive performance.
\end{abstract}

\section{Introduction}
Efficient long-context modeling in large language models (LLMs) is essential for applications spanning full-document comprehension and extended multi-turn dialogues~\cite{openai2023gpt4,meta2024llama3,guo2025deepseekr1}.
However, transformer-based LLMs face two challenges: 1) the linear growth of key-value (KV) cache during decoding and 2) the quadratic computational complexity of self-attention~\cite{vaswani2017attention}, together hindering long‑context deployment.

To alleviate the memory overhead (challenge~1), Multi-head Latent Attention (MLA)~\cite{deepseekai2024deepseekv2} projects tokens into a shared low-dimensional latent space, significantly reducing the per-token KV cache size. During inference, MLA caches only the compressed latent vectors $\mathbf{C}^{KV}$ and the positional keys $\mathbf{K}^{R}$, where $\mathbf{K}^{R}$ encodes precise positional information via Rotary Position Embedding (RoPE)~\cite{su2024roformer}. Owing to its favorable trade-off between memory efficiency and model capacity, MLA has been widely adopted in recent long-context models~\cite{deepseekai2024deepseekv2,liu2024deepseekv3,guo2025deepseekr1,Team2025kimik2,kimiteam2025kimilinear}. Nevertheless, MLA still retains all $L$ latent vectors and performs dense attention, preserving the quadratic computational bottleneck. 

Separately, efficient attention methods mitigate quadratic computation (challenge 2) via sparsification. Early approaches used fixed sparse patterns~\cite{beltagy2020longformer,zaheer2020big}, which lack adaptability and risk discarding important information. Recent dynamic sparse methods~\cite{jiangminference,laiflexprefill,xu2025xattention} selectively skip computation blocks or evict less relevant tokens to reduce computation. However, these approaches remain susceptible to potentially important information loss.

Crucially, these two lines of work cannot be directly combined to address both challenges simultaneously. Most existing sparse methods operate on the computation of attention scores between original queries and keys, which require full-dimensional representations. 
When applied to MLA, these methods must first reconstruct full-dimensional KV matrices before performing sparsification. 
Consequently, these methods fail to leverage the compressed latent structure for further efficiency gains, particularly in reducing the KV cache beyond what MLA already achieves. This reveals a significant yet overlooked gap: existing methods lack a mechanism to perform efficient computation reduction natively within MLA's latent space.
 
In this paper, we bridge this gap by directly condensing redundant context within the MLA's latent space into a compact set of representatives, thereby reducing the number of vectors participating in attention computation and stored in the KV cache. However, MLA's disentangled representation presents a unique challenge: the \textit{semantic} component $\mathbf{C}^{KV}$ and the \textit{positional} component $\mathbf{K}^{R}$ have distinct functional properties due to their different encodings. Semantic information, which captures the continuous and often smooth content representations, can be aggregated across tokens without significant loss of fidelity~\cite{bolyatoken, feng2023efficient}; while positional encoding requires careful preservation to maintain accurate relative positional relationships. Thus, an effective condensation strategy must treat these two components differently.

Based on the above insights, we propose \textbf{\methodname} (\textbf{\sexyname}), an efficient attention mechanism that performs structured context condensation natively within latent space. \sexyname partitions context into groups and compresses each group into a representative latent vector via weighted pooling for the semantic component $\mathbf{C}^{KV}$, while preserving positional information by selecting the token with the highest relevance within each group for $\mathbf{K}^{R}$. In this way, \sexyname reduces both the KV cache size and the attention complexity, without introducing additional parameters.
We further provide a theoretical guarantee that the approximation error is uniformly bounded, independent of context length. \revised{Crucially, the decoupled condensation strategy is not specific to MLA; it generalizes to other attention mechanisms such as GQA, as we validate in Section~\ref{sec:generality}.} We summarize our contributions as follows:
\begin{itemize}[left=0pt]
    \item We propose \methodname (\sexyname), an efficient attention mechanism that performs structured context condensation directly in MLA's latent space, reducing both KV cache and attention computations, instead of sparsification only on reconstructed KV.
    \item  We design a latent-space condensation strategy that decouples semantic and positional processing: semantic information is adaptively aggregated via query-aware weighted pooling, while positional fidelity is preserved through hard anchor selection. This design avoids the signal-blending issue in conventional approaches.
    \item \sexyname can be easily integrated into pretrained models via lightweight fine-tuning, requiring no additional parameters. Experiments demonstrate that \sexyname delivers up to \textbf{2.5$\times$} prefill speedup and \textbf{90\%} KV cache reduction at 128K context length while maintaining comparable performance.
\end{itemize}

\section{Related Works}
\textbf{Efficient Attention}. To reduce the memory footprint of the KV cache, several attention variants have been proposed. Multi-Query Attention (MQA)~\cite{Shazeer2019mqa} shares a single key and value head across all query heads, while Grouped-Query Attention (GQA)~\cite{ainslie2023gqa} groups query heads to share key-value heads, improving efficiency while maintaining quality. More recently, Multi-head Latent Attention (MLA)~\cite{deepseekai2024deepseekv2} compresses KV states into a low-dimensional latent space, significantly reducing per-token cache size. It has been widely adopted in state-of-the-art long-context models~\cite{deepseekai2024deepseekv2,liu2024deepseekv3,guo2025deepseekr1,Team2025kimik2,kimiteam2025kimilinear}. However, these methods retain the quadratic computational complexity of standard attention.

Besides, sparse attention methods~\cite{zaheer2020big,beltagy2020longformer,zhang2025dga,chen2025core} reduce the computational cost by skipping some tokens or computation blocks. Early methods employ fixed sparse patterns~\cite{zaheer2020big,ding2023longnet,beltagy2020longformer,xiao2024efficient}, which lack adaptability. 
To introduce adaptability, DuoAttention~\cite{xiaoduoattention} learns head-specific static sparse patterns during an additional offline training phase. However, static patterns cannot adjust to the varying importance of tokens across different contexts. Recent dynamic sparse attention methods~\cite{laiflexprefill,jiangminference,xiaoduoattention} selectively compute only a subset of attention blocks. 

\revised{Unlike prior work that treats KV cache reduction and attention sparsity separately, \sexyname operates directly on MLA's compressed latent codes. By condensing information in the low-rank space, it avoids the overhead of full-dimensional reconstruction while simultaneously reducing both memory footprint and computational complexity.}

\noindent\textbf{System-Level Optimizations for Long-Context Inference}. Another line of work focuses on kernel-level optimizations and cache management for long-context inference.
FlashAttention~\cite{dao2022flashattention} and its successors speed up attention by optimizing memory access patterns.
PagedAttention~\cite{kwon2023pageattention} manages the KV cache more efficiently by allowing non-contiguous storage, reducing memory fragmentation.
Token-level eviction policies~\cite{li2024snapkv,hao2025omnikv,liu2025chunkkv} retain only the most salient tokens based on attention scores, while query-aware cache selection methods~\cite{tang2024quest} learn to select a compact set of KV pairs for each query.
These techniques are largely orthogonal to our approach.

\section{Preliminaries}\label{sec:preliminaries}
Multi-head Latent Attention (MLA)~\cite{deepseekai2024deepseekv2} is a core breakthrough for long-context large language models. It projects tokens into a low-dimensional latent space, drastically reducing per‑token KV‑cache memory while preserving modeling capability. Given an input sequence $\mathbf{X} \in \mathbb{R}^{L \times d}$, MLA computes compressed latents:
\begin{equation}
    \mathbf{C}^{Q} \small{=} \mathbf{X} W^{DQ}, \quad \mathbf{C}^{KV} \small{=} \mathbf{X} W^{DKV},
\end{equation}
where $W^{DQ}, W^{DKV} \small{\in} \mathbb{R}^{d \times d_c}$ are down-projection matrices and $d_c \small{\ll} d$. For each head $h$, queries, keys, and values are reconstructed from these latents:
\begin{align}
    \mathbf{Q}^h &\small{=} [\mathbf{C}^Q W^{UQ_h} , \mathbf{Q}^{R_h}] \in \mathbb{R}^{L \times d_k}, \\
    \mathbf{K}^h &\small{=} [\mathbf{C}^{KV} W^{UK_h}, \mathbf{K}^{R}] \in \mathbb{R}^{L \times d_k}, \\
    \mathbf{V}^h &\small{=} \mathbf{C}^{KV} W^{UV_h}\in \mathbb{R}^{L \times d_v},
\end{align}
where $\mathbf{Q}^{R_h}, \mathbf{K}^{R} \in \mathbb{R}^{L \times d_r}$ are Rotary Position Embedding (RoPE)~\cite{su2024roformer} augmented components that encode positional information. $W^{UQ_h}, W^{UK_h} \in \mathbb{R}^{d_c \times d_k'}$ and $W^{UV_h} \in \mathbb{R}^{d_c \times d_v}$ are the up-projection matrices for head $h$, with $d_k' + d_r = d_k$. The attention is computed as:
\begin{equation}
    \text{MLA}^h = \operatorname{softmax}\left( \frac{\mathbf{Q}^h (\mathbf{K}^h)^\top}{\sqrt{d_k}} \right) \mathbf{V}^h.
\end{equation}
The outputs of all heads are then combined via concatenation. During inference, MLA caches only $\mathbf{C}^{KV}$ and $\mathbf{K}^{R}$, reducing per-token memory. 
Owing to its favorable efficiency‑accuracy trade‑off, MLA has been widely adopted as a core component in state‑of‑the‑art long‑context LLMs~\cite{liu2024deepseekv3,deepseekai2025deepseekv32,Team2025kimik2,kimiteam2025kimilinear}.

\noindent\textbf{Limitations of MLA}. 
While MLA substantially reduces per-token KV-cache memory by projecting tokens into a low-dimensional latent space, it still requires all $L$ tokens to participate in attention computation, resulting in significant redundant computation and memory access and leaving the quadratic dependence on context length unchanged. 
Existing efficient attention methods~\cite{xiao2024efficient,xiaoduoattention,laiflexprefill} can alleviate computational cost by sparsifying attention, but they operate on reconstructed full-dimensional key–value states. When applying these methods to MLA, this reconstruction step negates the efficiency benefits of its low-dimensional latent representation. As a result, current approaches \textbf{fail to exploit MLA's latent structure to jointly reduce attention computation and KV-cache growth}.

\begin{figure*}
    \centering
    \includegraphics[width=\linewidth]{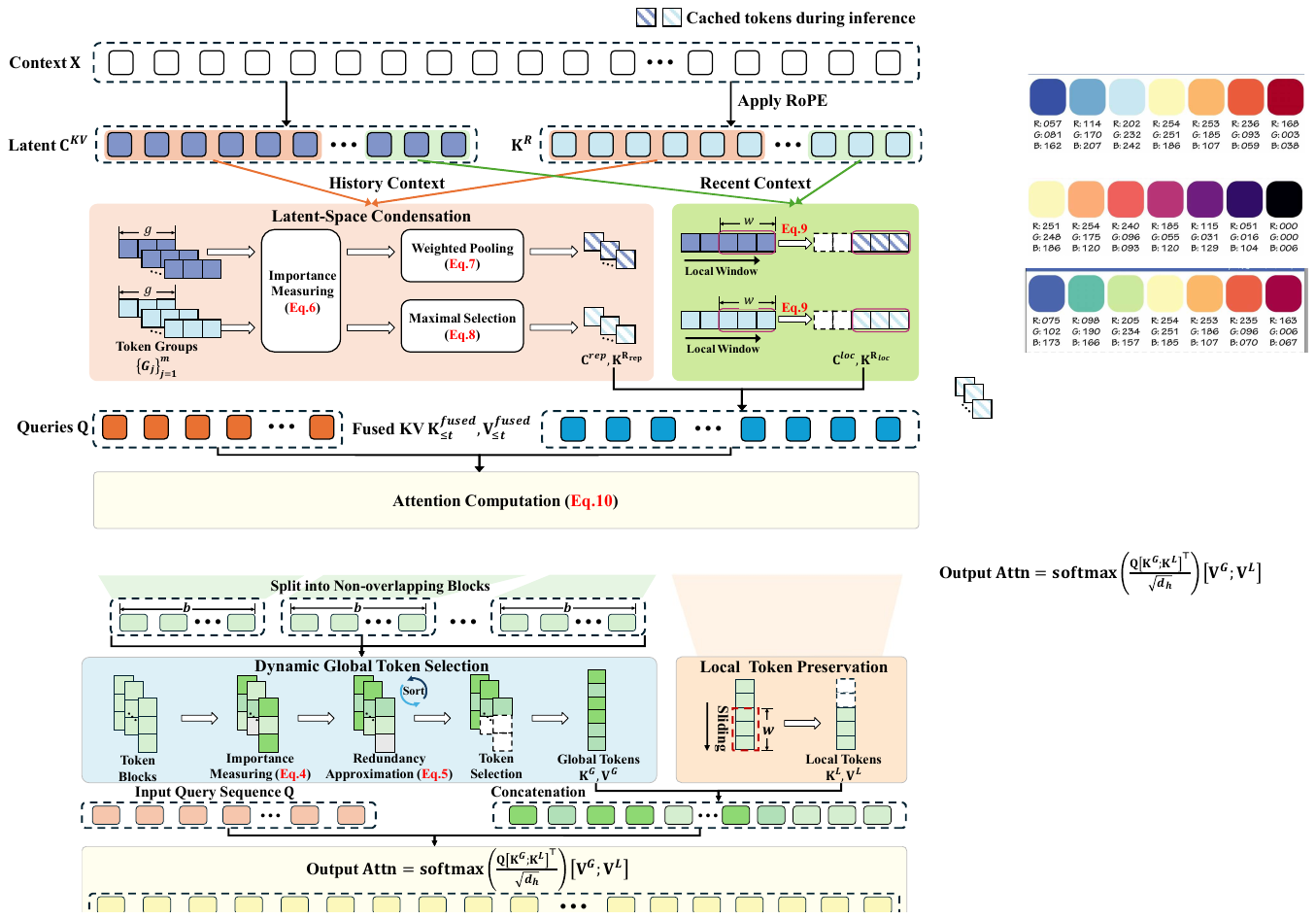}
    \caption{An overview of \methodname (\sexyname). The history context is condensed into a compact set of representatives via group-wise condensation in the latent space: semantic information is aggregated through weighted pooling of $\mathbf{C}^{KV}$, while positional information is preserved via anchor selection from $\mathbf{K}^{R}$. 
        The recent context is retained in full fidelity without condensation.}
    \label{fig:method}
\end{figure*}

\section{\methodname}
\subsection{Motivations and Method Overview}
Long context sequences often exhibit substantial redundancy: only a small subset of tokens is typically relevant for the task~\cite{chen2025core,zhang2025dga, xiao2024efficient}.
This suggests that computing attention over all $L$ tokens incurs unnecessary computational overhead. 
While Multi-head Latent Attention (MLA) effectively compresses the KV cache into a low-dimensional latent space, it still performs quadratic attention over all $L$ tokens.
We propose to address this bottleneck by condensing redundant context before attention computation. 
Crucially, unlike methods that sparsify attention on reconstructed full-dimensional states, \textit{we operate natively within compressed latent space, leveraging its latent structure for further efficiency gains}.

Unfortunately, MLA’s latent representation is naturally \textit{disentangled}: each token is encoded by a compressed semantic latent $\mathbf{C}^{KV}$ and a residual positional key $\mathbf{K}^R$ with precise positional information via RoPE~\cite{su2024roformer}.
The semantic and positional components differ functionally due to their distinct encodings: 
semantic information can often be aggregated~\cite{feng2023efficient,bolyatoken}, whereas positional encoding is highly nonlinear and should not be arbitrarily blended. Hence, an effective condensation strategy must treat these two components differently.

Motivated by these insights, we propose \textbf{\methodname (\sexyname)}, which employs a dual-path group-wise condensation strategy: semantic vectors are aggregated via query-aware pooling, while positional fidelity is preserved by selecting a positional anchor per group.
By operating natively in MLA’s latent space, \sexyname reduces both the number of cached vectors and the attention complexity, without introducing additional parameters. 
The framework of \sexyname is illustrated in Figure \ref{fig:method}, with its algorithm in Algorithm \ref{alg:method}.

\subsection{Latent-Space Condensation}\label{sec:latent-space-condensation}

We perform latent-space token condensation by explicitly distinguishing semantic and positional information, and applying different reduction operators to each. This enables effective reduction of long-range context while preserving sufficient information required for attention computation.

Given the latent matrix $\mathbf{C}^{KV} \in \mathbb{R}^{L \times d_c}$ and the positional keys $\mathbf{K}^R \in \mathbb{R}^{L \times d_r}$, we partition the history context into
$m = \left\lfloor \frac{L - w}{g} \right\rfloor$ contiguous groups $\{G_j\}_{j=1}^{m}$ of size $g$, where $w$ denotes a local window size that preserves fine-grained local context (detailed later).
To assess token relevance within each group, we adopt a query-aware scoring mechanism that reflects the current decoding focus rather than relying on static heuristics.
Following~\citet{laiflexprefill}, we compute a summary query by averaging the last $g$ queries, \ie, $\bar{\mathbf{q}} = \text{Average}(\mathbf{Q}_{[-g:]})$.
For each token $i \in G_j$, we compute an importance score
$s_i = \bar{\mathbf{q}}^{\top} \mathbf{k}_i / \sqrt{d_h}$,
followed by a group-wise softmax normalization:
\begin{equation}\label{eq:importance-score}
    \alpha_i^{(j)} = \frac{\exp(s_i)}{\sum_{k \in G_j} \exp(s_k)}, \quad \forall i \in G_j.
\end{equation}
We treat semantic and positional components differently based on their distinct functional properties.

\noindent\textbf{Semantic condensation via weighted pooling}. 
For the semantic latent vectors $\mathbf{c}_i^{KV}$ within each group, we compute a representative latent via weighted pooling:
\begin{equation}\label{eq:pooling}
    \mathbf{c}_j^{\text{rep}} = \sum_{i \in G_j} \alpha_i^{(j)} \mathbf{c}_i^{KV} \in \mathbb{R}^{d_c},
\end{equation}
where $\mathbf{c}_i^{KV}$ denotes the $i$-th row of $\mathbf{C}^{KV}$. 
Unlike sparse attention methods that discard tokens or computation blocks, weighted pooling preserves information from all tokens in the group while emphasizing those most relevant to the current query.
This design admits a simple yet principled interpretation.
The following proposition shows that weighted pooling yields the optimal representative under an expected reconstruction criterion.

\begin{prop}(\textbf{Optimal Condensation via Weighted Pooling})\label{lem:optimal-pooling}
Let \(\{\mathbf{c}_i \in \mathbb{R}^{d_c}\}_{i=1}^g\) be a set of latent vectors and \(\boldsymbol{\alpha} \in \Delta^{g-1}\) a probability distribution over them, with \(\alpha_i \ge 0\) and \(\sum_i \alpha_i = 1\).
Consider the expected squared reconstruction error
\[
\mathcal{L}(\mathbf{c}^{\text{rep}}) = \mathbb{E}_{i \sim \boldsymbol{\alpha}} \bigl[ \|\mathbf{c}_i - \mathbf{c}^{\text{rep}}\|_2^2 \bigr]
= \sum_{i=1}^g \alpha_i \|\mathbf{c}_i - \mathbf{c}^{\text{rep}}\|_2^2 .
\]
The vector \(\mathbf{c}^{\text{rep}}\) that minimizes \(\mathcal{L}\) is uniquely given by the convex combination \(\mathbf{c}^{\text{rep}} = \sum_{i=1}^g \alpha_i \mathbf{c}_i.\)
\end{prop}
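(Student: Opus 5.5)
The plan is to use the standard bias--variance (parallel-axis) decomposition, which gives existence and uniqueness in one step without any calculus. Write $\bar{\mathbf{c}} = \sum_{i=1}^g \alpha_i \mathbf{c}_i$. For an arbitrary candidate $\mathbf{c}^{\text{rep}}$, insert $\bar{\mathbf{c}}$ into each term:
\[
\|\mathbf{c}_i - \mathbf{c}^{\text{rep}}\|_2^2 = \|\mathbf{c}_i - \bar{\mathbf{c}}\|_2^2 + 2(\mathbf{c}_i - \bar{\mathbf{c}})^\top(\bar{\mathbf{c}} - \mathbf{c}^{\text{rep}}) + \|\bar{\mathbf{c}} - \mathbf{c}^{\text{rep}}\|_2^2 .
\]
Then multiply by $\alpha_i$ and sum over $i$.

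The key observation is that the cross term vanishes. Because $\sum_i \alpha_i = 1$, we have $\sum_i \alpha_i(\mathbf{c}_i - \bar{\mathbf{c}}) = \bar{\mathbf{c}} - \bar{\mathbf{c}} = \mathbf{0}$. Using $\sum_i \alpha_i = 1$ once more on the last term, we obtain
\[
\mathcal{L}(\mathbf{c}^{\text{rep}}) = \mathcal{L}(\bar{\mathbf{c}}) + \|\bar{\mathbf{c}} - \mathbf{c}^{\text{rep}}\|_2^2 .
\]
The first term does not depend on $\mathbf{c}^{\text{rep}}$. The second term is nonnegative and equals zero exactly when $\mathbf{c}^{\text{rep}} = \bar{\mathbf{c}}$. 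Hence $\bar{\mathbf{c}}$ is a minimizer, and every other point gives a strictly larger loss, which is the uniqueness claim.

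As a cross-check, I would also note the calculus route. $\mathcal{L}$ is a quadratic whose Hessian is $2(\sum_i\alpha_i) I = 2I \succ 0$, so it is strictly convex. Setting the gradient $-2\sum_i \alpha_i(\mathbf{c}_i - \mathbf{c}^{\text{rep}})$ to zero gives the same point.

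I expect no real obstacle here. The only point needing care is that normalization $\sum_i\alpha_i = 1$ is what both kills the cross term and makes the coefficient of $\|\bar{\mathbf{c}} - \mathbf{c}^{\text{rep}}\|_2^2$ exactly $1$, hence positive. Without normalization, for instance with all weights zero, uniqueness would fail.

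Nonnegativity of the $\alpha_i$ is not needed for the argument. It only makes $\bar{\mathbf{c}}$ a genuine convex combination, which matches the softmax weights $\alpha_i^{(j)}$ in \eqref{eq:importance-score}. This is what lets us identify $\bar{\mathbf{c}}$ with $\mathbf{c}_j^{\text{rep}}$ in \eqref{eq:pooling}.
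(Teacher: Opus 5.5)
Your proof is correct, but its main line differs from the paper's. The paper expands the squared norm to write $\mathcal{L}(\mathbf{c}^{\text{rep}}) = \sum_i \alpha_i \mathbf{c}_i^\top\mathbf{c}_i - 2\bar{\mathbf{c}}^\top\mathbf{c}^{\text{rep}} + \mathbf{c}^{\text{rep}\top}\mathbf{c}^{\text{rep}}$. It then sets the gradient $2(\mathbf{c}^{\text{rep}} - \bar{\mathbf{c}})$ to zero and uses the Hessian $2\mathbf{I}_{d_c}$ (strict convexity) to conclude that the stationary point is the unique global minimizer. That is exactly the route you give only as a cross-check. Your identity $\mathcal{L}(\mathbf{c}^{\text{rep}}) = \mathcal{L}(\bar{\mathbf{c}}) + \|\mathbf{c}^{\text{rep}} - \bar{\mathbf{c}}\|_2^2$ completes the square in that same quadratic instead of differentiating it, and it buys a bit more. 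Existence, global optimality and uniqueness follow from one exact equality, with no appeal to the fact that a critical point of a strictly convex function is its unique global minimizer. It also quantifies the cost of any other choice: a representative $\mathbf{c}^{\text{alt}}$ from mean pooling or max selection incurs exactly $\|\mathbf{c}^{\text{alt}} - \bar{\mathbf{c}}\|_2^2$ of excess loss. The optimal loss is the $\boldsymbol{\alpha}$-weighted variance $\sum_i \alpha_i\|\mathbf{c}_i - \bar{\mathbf{c}}\|_2^2$, which would sharpen the paper's informal comparison with alternative pooling strategies. The calculus route is more mechanical, but it is the one that carries over to non-quadratic convex reconstruction losses, where no such clean decomposition exists. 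Your side remarks are also right. Only $\sum_i \alpha_i = 1$ is used, and the paper's Hessian argument likewise never uses $\alpha_i \ge 0$. Nonnegativity matters only for reading $\bar{\mathbf{c}}$ as a convex combination that matches the softmax weights.
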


Applying Proposition~\ref{lem:optimal-pooling} with $\alpha_i = \alpha_i^{(j)}$ shows that our weighted pooling minimizes the expected error in representing each group’s semantic content.
The full proof is provided in Appendix~\ref{app:proof-pooling}.

\noindent\textbf{Positional anchoring via max-selection}.
The residual positional keys $\mathbf{k}_i^R$ are encoded using Rotary Position Embedding (RoPE)~\cite{su2024roformer}, which is a nonlinear function of absolute position.
Direct pooling over $\mathbf{k}_i^R$ would blend distinct positional signals and distort relative positional relationships.
To preserve accurate positional information, we select the token with the highest importance score within each group as the positional anchor:
\begin{equation}\label{eq:k_select}
    I_j = \arg\max_{i \in G_j} \alpha_i^{(j)}, 
    \quad \mathbf{k}_j^{R_{\text{rep}}} = \mathbf{k}_{I_j}^R .
\end{equation}
This strategy ensures that each representative’s positional information corresponds to a concrete token position, preserving valid attention geometry.
In contrast to semantic aggregation, positional information is preserved through hard selection to avoid blending incompatible positional signals.

\noindent\textbf{Preserving Fine-Grained Local Context}. 
While latent condensation effectively reduces redundancy in long-range context, fine-grained local information remains critical for accurate next-token prediction~\cite{zong2021long,yang2021context}.
Accordingly, we retain a local window of the most recent $w$ latent vectors in full fidelity.
Any residual tokens that do not form a complete group are also included in this window.
Specifically, the local window consists of the preceding $k = w + r$ latent vectors, where $r = (L - w) \bmod g$, ensuring that every token is accounted for.

\begin{algorithm}[t]
\caption{\methodname}
\label{alg:method}
\begin{algorithmic}[1]
\REQUIRE Sequence length $L$, latents $\mathbf{C}^{KV} \small{=} [\mathbf{c}^{KV}_1;\ldots; \mathbf{c}^{KV}_L]$, residual keys $\mathbf{K}^{R} \small{=} [\mathbf{k}^R_1;\ldots;\mathbf{k}^R_L]$, queries $\mathbf{Q} \small{=} [\mathbf{q}_1;\ldots;\mathbf{q}_L]$, window size $w$, group size $g$.
\STATE $\bar{\mathbf{q}}\small{\gets} \text{Avg}([\mathbf{q}_{L-g+1};\cdots; \mathbf{q}_L])$, $m \small{\gets}\lfloor (L - w) / g \rfloor$\\
\FOR{$j = 1$ to $m$}
    \STATE $G_j\small{=}\{(j\small{-}1)g\small{+}1, (j\small{-}1)g\small{+}2,\dots, jg\}$
    \STATE $\alpha_i^{(j)} \gets \text{softmax}_i(\bar{\mathbf{q}}^\top \mathbf{k}_i / \sqrt{d_h})$ for $i \in G_j$
    \STATE $\mathbf{c}_j^{\text{rep}} \gets \sum_{i \in G_j} \alpha_i^{(j)} \mathbf{c}_i^{KV}$
    \STATE $I_j \gets \arg\max_{i \in G_j} \alpha_i^{(j)}$
    \STATE $\mathbf{k}_j^{\text{rep}} \gets [\mathbf{c}_j^{\text{rep}} W^{UK}, \mathbf{k}^R_{I_j}]$, $\mathbf{v}_j^{\text{rep}} \gets \mathbf{c}_j^{\text{rep}} W^{UV}$
\ENDFOR
\STATE $\mathbf{K}^{\text{rep}} \small{\gets} [\mathbf{k}_1^{\text{rep}};\dots;\mathbf{k}_{m}^{\text{rep}}]$, $\mathbf{V}^{\text{rep}} \small{\gets} [\mathbf{v}_1^{\text{rep}};\dots;\mathbf{v}_{m}^{\text{rep}}]$ 
\STATE $k \gets \max\left(w, w + \left((L-w) \bmod g)\right)\right)$ 
\STATE $\mathbf{K}^{\text{loc}} \small{=} [\mathbf{k}_{L-k}; \dots; \mathbf{k}_L], \mathbf{V}^{\text{loc}} \small{=} [\mathbf{v}_{L-k}; \dots; \mathbf{v}_L]$\\
\STATE $\mathbf{K}^{\text{fused}} \small{\gets} [\mathbf{K}^{\text{rep}}; \mathbf{K}^{\text{loc}}]$, $\mathbf{V}^{\text{fused}} \small{\gets} [\mathbf{V}^{\text{rep}}; \mathbf{V}^{\text{loc}}]$
\STATE $\mathbf{Attn} \gets \text{softmax}\bigl(\mathbf{Q} (\mathbf{K}^{\text{fused}})^\top / \sqrt{d_h}\bigr) \mathbf{V}^{\text{fused}}$

\ENSURE Attention output $\mathbf{Attn}_t$
\end{algorithmic}
\end{algorithm}

\subsection{Attention with Condensed Context}\label{sec: Attention with Condensed Context}

\textbf{Representative Key–Value Construction}. For each group $G_j$, the final representative key and value matrices are constructed using MLA's original up‑projection matrices:
\begin{equation*}
    \mathbf{k}_j^{\text{rep}} \small{=} \left[ \mathbf{c}_j^{\text{rep}} W^{UK},  \mathbf{k}_j^{R_{\text{rep}}} \right], \quad
    \mathbf{v}_j^{\text{rep}} \small{=} \mathbf{c}_j^{\text{rep}} W^{UV}.
\end{equation*}
Collecting all representatives, we obtain condensed key and value matrices:
\begin{equation*}
    \mathbf{K}^{\text{rep}} \small{=} [\mathbf{k}_1^{\text{rep}}; \dots; \mathbf{k}_m^{\text{rep}}],
\mathbf{V}^{\text{rep}} \small{=} [\mathbf{v}_1^{\text{rep}}; \dots; \mathbf{v}_m^{\text{rep}}].
\end{equation*}
We preserve the corresponding keys and values of the local context without condensation:
\begin{equation}
    \mathbf{K}^{loc} \small{=} [\mathbf{k}_{L-k+1}; \dots; \mathbf{k}_L], \mathbf{V}^{loc} \small{=} [\mathbf{v}_{L-k+1}; \dots; \mathbf{v}_L],
\end{equation}
where each \(\mathbf{k}_j = [\mathbf{c}_j^{KV} W^{UK}, \mathbf{k}_j^R]\) and \(\mathbf{v}_j = \mathbf{c}_j^{KV} W^{UV}\). This ensures that the most recent and relevant tokens are always attended to with full fidelity, seamlessly integrating condensed long-range context with detailed local information.

\noindent\textbf{Prefilling with Fused Context}. During prefilling, the condensed key and value matrices $\mathbf{K}^{\text{rep}}$ and $\mathbf{V}^{\text{rep}}$ are combined with the full-fidelity local keys and values $\mathbf{K}^{l}$ and $\mathbf{V}^{l}$ to form the fused context:
\begin{equation*}
    \mathbf{K}^{\text{fused}} = \bigl[ \mathbf{K}^{\text{rep}}; \mathbf{K}^{loc} \bigr],
\mathbf{V}^{\text{fused}} = \bigl[ \mathbf{V}^{\text{rep}}; \mathbf{V}^{loc} \bigr].
\end{equation*}
We then compute attention over this fused set:
\begin{equation}
    \mathbf{Attn} = \operatorname{softmax}\!\left( \frac{\mathbf{Q} (\mathbf{K}^{\text{fused}})^\top}{\sqrt{d_h}} \right) \mathbf{V}^{\text{fused}}.
\end{equation}
Notably, when the total sequence length $L$ is smaller than $w + g$, the long-range condensation provides negligible efficiency benefits.
In such cases, we revert to standard MLA computation, ensuring that the model always operates in the most efficient regime without sacrificing accuracy. 

\noindent\paragraph{Decoding and Online Cache Update.} 
During autoregressive decoding, we only cache only the $m$ representative latent vectors $\{\mathbf{c}_j^{\text{rep}}\}_{j=1}^m$ and their positional anchors $\{\mathbf{k}_j^{R_{\text{rep}}}\}_{j=1}^m$, together with the $w$ most recent latent vectors in full fidelity.
Once the number of newly generated tokens reaches the group size $g$, we use the average query of the last $g$ tokens to compute importance scores (as in Eq.~\eqref{eq:importance-score}) for the earliest $g$ tokens in the buffer. These $g$ tokens are then condensed into a single representative via Eqs.~\eqref{eq:pooling} and~\eqref{eq:k_select}. This new representative is appended to the cache.
This on‑the‑fly way ensures that the cache size remains $\mathcal{O}(m+w)$ during decoding, and each decoding step attends to only $m+w$ keys and values, thereby reducing both memory footprint and per‑token decoding latency.

\subsection{Further Analysis of \sexyname}
\textbf{Theoretical Analysis}. 
We provide a formal guarantee on the approximation quality of \sexyname. The following theorem bounds the error between the output of \sexyname and MLA attention.

\begin{thm}[\textbf{Uniform Error Bound}]
\label{thm:uniform-bound}
Fix a query $\mathbf{q}_t$ with $\|\mathbf{q}_t\|_2 \le Q$ and assume $\|\mathbf{v}_i\|_2 \le V$ for all values. For each distant-history group $G_j$, let $\mathbf{k}_j^{\mathrm{rep}},\mathbf{v}_j^{\mathrm{rep}}$ be the representative key and value that \sexyname uses for every token $i\in G_j$, and let $\delta_k,\delta_v$ be uniform bounds on the per-token key and value deviations: $\|\mathbf{k}_i - \mathbf{k}_j^{\mathrm{rep}}\|_2 \le \delta_k,
\|\mathbf{v}_i - \mathbf{v}_j^{\mathrm{rep}}\|_2 \le \delta_v.$
Then the $\ell_2$ error between the full MLA attention and \sexyname satisfies
\[
\big\| \mathbf{Attn}_t - \mathbf{Attn}_t^{\mathrm{LCA}} \big\|_2
\;\le\; V\big(e^{2Q\delta_k/\sqrt{d_h}}-1\big) \;+\; \delta_v.
\]
\end{thm}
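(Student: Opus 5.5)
The approach is to model \sexyname as ``full attention with each token's key and value replaced by its group's representative.'' For a distant-history token $i\in G_j$, set $\tilde{\mathbf{k}}_i=\mathbf{k}_j^{\mathrm{rep}}$ and $\tilde{\mathbf{v}}_i=\mathbf{v}_j^{\mathrm{rep}}$. Local-window tokens keep their original key and value, so for them the deviations are zero. Write $s_i=\mathbf{q}_t^\top\mathbf{k}_i/\sqrt{d_h}$ and $\tilde s_i=\mathbf{q}_t^\top\tilde{\mathbf{k}}_i/\sqrt{d_h}$, with softmax weights $p_i$ and $\tilde p_i$. Then
\[
\mathbf{Attn}_t=\sum_i p_i\mathbf{v}_i, \qquad \mathbf{Attn}_t^{\mathrm{LCA}}=\sum_i \tilde p_i\tilde{\mathbf{v}}_i .
\]
This identification needs one caveat. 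Attending once to a single representative is not literally the same as attending $g$ times to copies of it: the copies differ by a $\log g$ shift in the logits. So the plan interprets the theorem's phrase ``the representative \ldots\ uses for every token $i\in G_j$'' as exactly this per-token replacement. Equivalently, the condensed softmax is taken with multiplicity $g$ per group, and I will state this convention explicitly.

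With that convention, split the error by adding and subtracting $\sum_i\tilde p_i\mathbf{v}_i$:
\[
\mathbf{Attn}_t-\mathbf{Attn}_t^{\mathrm{LCA}}=\sum_i (p_i-\tilde p_i)\mathbf{v}_i+\sum_i\tilde p_i(\mathbf{v}_i-\tilde{\mathbf{v}}_i).
\]
The second term is bounded by $\sum_i\tilde p_i\delta_v=\delta_v$, by the triangle inequality and $\sum_i\tilde p_i=1$. The first term is bounded by $V\sum_i|p_i-\tilde p_i|$. So everything reduces to a total-variation bound on the two softmax distributions.

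For the logit perturbation, Cauchy--Schwarz gives $|s_i-\tilde s_i|\le \|\mathbf{q}_t\|_2\|\mathbf{k}_i-\tilde{\mathbf{k}}_i\|_2/\sqrt{d_h}\le Q\delta_k/\sqrt{d_h}=:\epsilon$ for every $i$. Next we use the standard ratio argument. Since $\tilde p_i/p_i=e^{\tilde s_i-s_i}\cdot Z/\tilde Z$, and the partition functions satisfy $e^{-\epsilon}\le \tilde Z/Z\le e^{\epsilon}$, we get $e^{-2\epsilon}\le \tilde p_i/p_i\le e^{2\epsilon}$. Hence
\[
|p_i-\tilde p_i|\le p_i\max\!\big(e^{2\epsilon}-1,\;1-e^{-2\epsilon}\big)=p_i(e^{2\epsilon}-1).
\]
Summing over $i$ gives $\sum_i|p_i-\tilde p_i|\le e^{2\epsilon}-1$. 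Combining the two terms yields $V(e^{2Q\delta_k/\sqrt{d_h}}-1)+\delta_v$. The bound is independent of $L$, because only normalized weights enter.

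The main obstacle is the ratio step, in particular obtaining the uniform multiplicative control $e^{\pm2\epsilon}$. One factor $e^{\epsilon}$ comes from the individual logit and the other from the normalizer. The other delicate point is the modeling convention in the first paragraph, which is what makes the two attention outputs comparable term by term.
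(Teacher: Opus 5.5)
Your proposal is correct and follows the paper's proof essentially step for step. The shared steps are the add-and-subtract split into a weight-perturbation term and a value-perturbation term, the Cauchy--Schwarz logit bound $\varepsilon = Q\delta_k/\sqrt{d_h}$, the two-sided ratio bound $e^{\pm 2\varepsilon}$ on the softmax weights, and the resulting $\ell_1$ bound $e^{2\varepsilon}-1$.

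Your explicit multiplicity-$g$ convention is exactly what the paper's ratio step tacitly uses: it takes $Z'=\sum_{j=1}^{t} e^{b_j}$ over all tokens, which does not literally match its own earlier definition $p'_i=\beta_{G(i)}/|G(i)|$ with the condensed normalizer. So your caveat about the $\log g$ shift makes precise an assumption the paper leaves implicit, rather than departing from it.
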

Theorem~\ref{thm:uniform-bound} shows that the approximation error depends only on the per‑token deviations $\delta_k,\delta_v$, the query/value norms $Q,V$, and the model dimension $d_h$, but not on the context length $L$. The deviations $\delta_k, \delta_v$ are controlled by our condensation design: groups consist of contiguous, semantically similar tokens, and the representative vectors are constructed via weighted pooling to preserve salient information. Moreover, the local window ensures the most recent tokens are never compressed ($\delta_k=\delta_v=0$ for those tokens). The full proof is provided in Appendix~\ref{app:thm_proof}. Experimentally, with practical group sizes, this approximation maintains performance comparable to full MLA while delivering significant efficiency gains.

\revised{To empirically validate the tightness of the bound, we measure the per-token deviations $\delta_k$ and $\delta_v$ across various context lengths and tasks. As detailed in Appendix~\ref{app:kv_deviations}, the average relative deviation remains consistently below 5\% for keys and 4\% for values, confirming the practical soundness of our grouping assumption.}

\noindent\textbf{Complexity Analysis}.
Our method achieves substantial reductions in both computational and memory complexity compared to standard MLA.
By compressing the long-range context into $m$ representative key–value pairs and retaining only $w$ recent tokens, \sexyname reduces the attention computation from $\mathcal{O}(L^2)$ to $\mathcal{O}(L(m+w)) = \mathcal{O}(Lm)$, where $m \ll L$ and $w$ is a small constant. 
Correspondingly, the KV cache memory footprint is reduced from $\mathcal{O}(L)$ to $\mathcal{O}(m+w)=\mathcal{O}(m)$, enabling efficient handling of long contexts with substantially lower computational and memory costs. 
We provide a detailed derivation in Appendix~\ref{app:complexity}.

\section{Experiments}
\subsection{Experimental Setup}
\textbf{Evaluation Benchmarks}.
We evaluate our method on both long-context and short-context tasks to comprehensively assess its performance. 
For long-context evaluation, we adopt \textbf{LongBench-E}~\cite{bai2023longbench} and \textbf{RULER}~\cite{hsieh2024ruler}, which comprehensively test retrieval, reasoning, and aggregation over sequences up to 128K tokens. 
To ensure no degradation on standard tasks, we further evaluate on three established short-context benchmarks: 
\textbf{MMLU}~\cite{hendrycks2021measuring}, \textbf{GSM8K}~\cite{cobbe2021gsm8k}, and \textbf{MBPP}~\cite{austin2021program}. 
We put more details in Appendix~\ref{app:benchmarks}.

\begin{table}[t]
    \centering
    \caption{Comparison of efficient attention methods.}
    \label{tab:comparisons}
    \small
    \renewcommand{\tabcolsep}{4pt}
    \begin{tabular}{c|ccc} 
        \hline
        Methods & Computation & Memory & No Extra Params \\ 
        \hline
        Minference & $<\mathcal{O}(L^2)$ & $\mathcal{O}(L)$ & $\checkmark$\\
        FlexPrefill & $<\mathcal{O}(L^2)$ & $\mathcal{O}(L)$ & $\checkmark$ \\
        KDA & $<\mathcal{O}(L^2)$ & $\mathcal{O}(L)$ &  \\
        Ours & $\mathcal{O}(Lm)$ & $\mathcal{O}(m)$ & $\checkmark$ \\
        \hline
        \end{tabular}
\end{table}

\begin{table*}[t]
\centering
\small
\caption{Comparisons on LongBench-E~\cite{bai2023longbench}. We report the latency of DeepSeek V2-Lite within contexts of 64K on H200 GPUs.}
\label{tab:longbench}
\begin{tabular}{l|cccccc|c|c}
\hline
Methods & S. QA & M. QA & Sum. & F. S. & Syn. & Code & Avg.↑ & FTL (s) ↓
\\ \hline

DeepSeek V2-Lite~\textit{\small{(MLA)}} & \textbf{23.92} & \underline{11.15} & \underline{17.52} & \textbf{63.19} & \underline{3.09} & \underline{58.17} & \textbf{29.51} 
& 3.20
\\
~~$\bullet$~MInference~\cite{jiangminference} & 14.54 & 5.61 & 13.80 & 41.15 & 2.17 & 41.01 & 19.71 
& 2.99 (1.1$\times$)
\\
~~$\bullet$~FlexPrefill~\cite{laiflexprefill} & 14.35 & 5.94 & 14.10 & 48.00 & 2.22 & 41.72 & 21.05 
& 2.51 (1.3$\times$)
\\
~~$\bullet$~KDA~\cite{kimiteam2025kimilinear} & 18.81 & \textbf{11.68} & 14.22 & 56.97 & \textbf{3.25} & 57.97 & 27.15 
& 2.47 (1.3$\times$)
\\
\rowcolor{pink!25} ~~$\bullet$~\sexyname (Ours) & \underline{22.61} & 8.90 & \textbf{23.74} & \underline{58.27} & 2.33 & \textbf{58.67} & \underline{29.09} 
& \textbf{1.80 (1.8$\times$)}
\\ \hline
\end{tabular}
\end{table*}

\begin{table*}[t]
\small
\centering
\caption{Comparisons on RULER~\cite{hsieh2024ruler} across 4-128K context. We report the latency of DeepSeek V2-Lite within contexts of 128K on H200 GPUs.}\label{tab:ruler}
\renewcommand{\tabcolsep}{6pt}
\begin{tabular}{l|cccccc|c|c} 
\hline
\multicolumn{1}{c|}{Methods} & 4K & 8K    & 16K   & 32K   & 64K  & 128K & Avg.↑ & FTL (s) ↓\\ 
\hline
DeepSeek V2-Lite~(\textit{\small{MLA}})                  & \textbf{79.51} & \textbf{78.82} & 62.28 & \underline{58.33} & \underline{47.57} & \underline{23.96} & \textbf{58.91} & 10.78 \\
~~$\bullet$~MInference~\cite{jiangminference}                  & 76.88 & 69.69 & 44.06 & 20.31 & 10.31 & 4.34 & 37.60 & 5.66 (1.9$\times$)\\
~~$\bullet$~FlexPrefill~\cite{laiflexprefill}                  & 72.81 & 70.00 & 56.25 & 19.06 & 9.38 & 7.19 & 39.11 & 5.38 (2.0$\times$)\\
~~$\bullet$~KDA~\cite{kimiteam2025kimilinear}                 & 72.22 & 73.61 & \textbf{64.24} & 51.39 & 44.10 & 22.22 & 54.63 & 4.96 (2.2$\times$)\\
\rowcolor{pink!25} ~~$\bullet$~\sexyname~(Ours) & \underline{77.19} & \underline{77.19} & \underline{63.75} & \textbf{59.69} & \textbf{50.63} & \textbf{24.38} & \underline{58.80} & \textbf{4.40 (2.5$\times$)} \\
\hline
\end{tabular}
\end{table*}

\begin{figure*}[h!]
    \centering
    \includegraphics[width=0.9\linewidth]{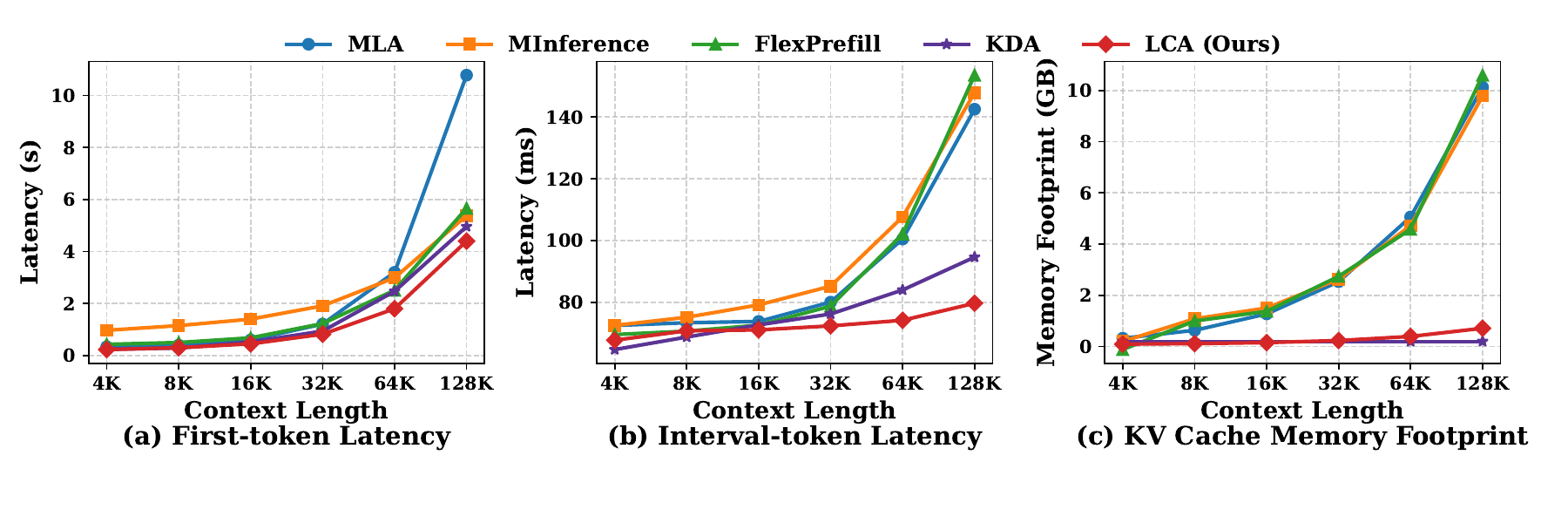}
    \vspace{-15pt}
\caption{Efficiency comparison of different context lengths. (a) presents the first-token generation latency. (b) shows the per-token (interval) latency during decoding. (c) illustrates the GPU memory footprint of KV cache.}~\label{fig:speed_memory}
\end{figure*}

\noindent\textbf{Implementation Details}. 
We replace the MLA layers of DeepSeek-V2-Lite (16B)~\cite{deepseekai2024deepseekv2} with our \sexyname, adopting it as the first publicly released MLA-based model in our study. This choice is motivated by its moderate model size, which enables comprehensive experimentation under limited GPU resources.
We develop a highly optimized kernel of our \sexyname using Triton~\cite{tillet2019triton} for high efficiency.
We fine-tune the model for 1,000 steps on the SlimPajama dataset (32K length)~\cite{fu2024data}. 
The group size $g$ and the window size $w$ are set to 16 and 1024, respectively. 
All experiments are conducted on 8$\times$H200 GPUs. 
Further implementation specifics are detailed in Appendix~\ref{app:implementation}.

\noindent\textbf{Compared Methods}. 
We evaluate \sexyname\ against three representative efficient attention methods: 
two dynamic sparse attention approaches (\textbf{FlexPrefill}~\cite{laiflexprefill} and \textbf{MInference}~\cite{jiangminference}) 
and a gated linear attention variant (\textbf{KDA}~\cite{kimiteam2025kimilinear}).
Since FlexPrefill and MInference are designed for standard self-attention, we adapt them to operate on reconstructed KV matrices from MLA's latents. 
For KDA, which requires integration during training from scratch, we implement a controlled adaptation: we augment the pretrained model with additional modules required by KDA and fine-tune it under the same conditions as \sexyname. 
Details are provided in Appendix~\ref{app:baselines}. We summarize the differences from existing methods in Table~\ref{tab:comparisons}.

\begin{table*}[tb]
\centering
\begin{minipage}[t]{0.48\textwidth}
\centering
\small
\captionof{table}{Comparisons on short-context tasks.}\label{tab:short_tasks}
\renewcommand{\tabcolsep}{2pt}
\begin{tabular}{l|ccc}
\hline
Methods & MMLU & GSM-8K & MBPP \\ \hline
DeepSeek V2-Lite~(\textit{\small{MLA}}) & \textbf{57.12} & \textbf{41.47} & \textbf{54.09}  \\
~~$\bullet~$FlexPrefill~\cite{laiflexprefill} & 53.85 & 33.74 & 52.14 \\
~~$\bullet~$MInference~\cite{jiangminference} & 51.01 & 32.90 & 46.69 \\
~~$\bullet~$KDA~\cite{kimiteam2025kimilinear} & 56.31 & 37.45 & 50.97 \\
\rowcolor{pink!25}~~$\bullet~$\sexyname (Ours) & \underline{57.04} & \underline{41.17} & \underline{53.31} \\ \hline
\end{tabular}
\end{minipage}
\hfill
\begin{minipage}[t]{0.48\textwidth}
\centering
\small
\captionof{table}{Ablation study on the pooling strategies for $\mathbf{c}^{\text{rep}}$ and $\mathbf{k}^{R_{\text{rep}}}$. ``MaxSel'' denotes max selection.}
\label{tab:ablation_pooling}
\renewcommand{\tabcolsep}{2pt}
\vspace{-3pt}
\begin{tabular}{l|cccc}
\hline
$\mathbf{c}_{\text{rep}}$ $\backslash$ $\mathbf{k}^{R_{\text{rep}}}$ & MaxPool & MeanPool & WeightedPool & MaxSel \\
\hline
MaxPool      & 27.44 & 27.01 & 27.43 & 28.89 \\
MeanPool     & 27.38 & 26.39 & 27.67          & 28.84 \\
WeightedPool & 27.93 & 27.04 & 27.83          & \textbf{29.09} \\
MaxSel       & 26.94 & 27.65 & 27.79          & 28.93 \\

\hline
\end{tabular}
\end{minipage}
\end{table*}

\begin{table*}[tb]
\centering
\begin{minipage}[t]{0.48\textwidth}
\small
\centering
\captionof{table}{\revised{Performance on the LongBench-E and efficiency at 128K context with MiniCPM3-4B.}}\label{tab:minicpm3}
\renewcommand{\tabcolsep}{1pt}
\resizebox{0.96\textwidth}{!}{
\begin{tabular}{l|c|cc}
\hline
Methods & Avg. $\uparrow$& FTL (s) $\downarrow$ & KV cache (GB) $\downarrow$ \\
\hline
MiniCPM3-4B (\textit{\small{MLA}}) & 43.28 & 8.05 & 4.36 \\
~~$\bullet$~\sexyname (Ours) & 42.05 & \textbf{3.61 (2.2$\times$)} & \textbf{0.30 (93\%$\downarrow$)} \\
\hline
\end{tabular}
}
\end{minipage}
\hfill
\begin{minipage}[t]{0.48\textwidth}
\small
\centering
\captionof{table}{\revised{Performance on the OlympiadBenchMath and efficiency at 128K context with Distill-Qwen-7B.}}\label{tab:gqa_model}
\renewcommand{\tabcolsep}{1pt}
\resizebox{0.97\textwidth}{!}{
\begin{tabular}{l|c|cc}
\hline
Methods & Acc. $\uparrow$& FTL (s) $\downarrow$& KV cache (GB) $\downarrow$ \\
\hline
Distill-Qwen-7B (\textit{\small{GQA}}) & 50.85 & 9.87 & 7.00 \\
~~$\bullet$~\sexyname (Ours) & 50.13 & \textbf{3.04 (3.25$\times$)} & \textbf{0.49 (93\%$\downarrow$)} \\
\hline
\end{tabular}
}
\end{minipage}
\end{table*}

\begin{figure*}[th]
\begin{minipage}[t]{0.41\textwidth}
\centering
\includegraphics[width=\linewidth]{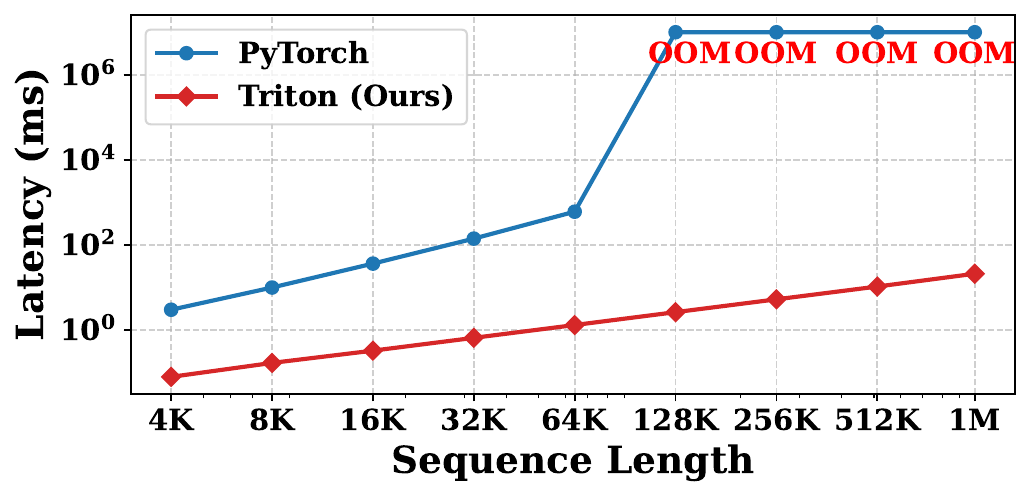}
\caption{Latency comparison between our Triton kernel and the PyTorch implementation.}
\label{fig:latency_triton}
\end{minipage}
\hfill
\begin{minipage}[t]{0.27\textwidth}
\centering
\includegraphics[width=\linewidth]{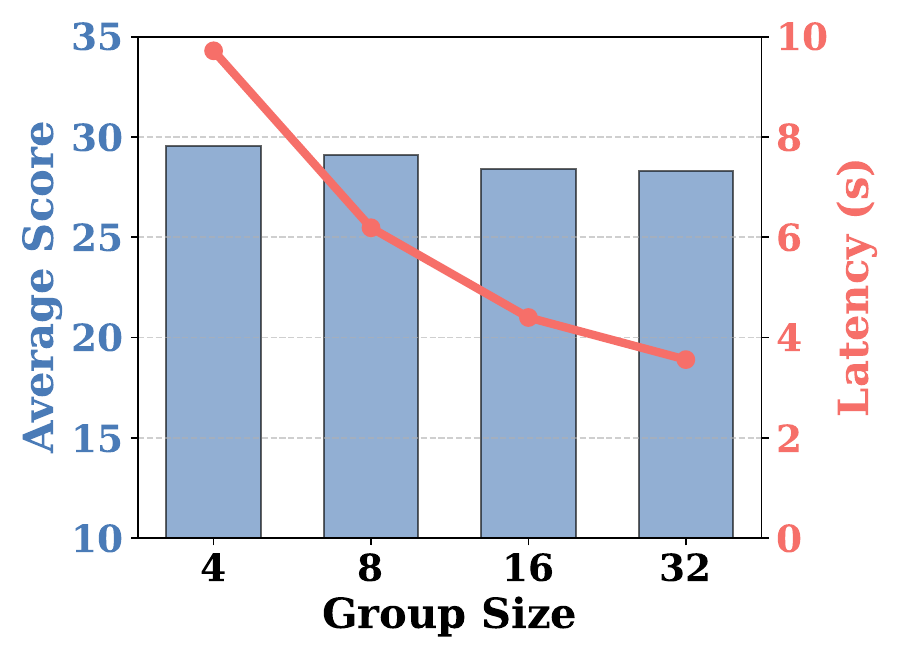}
\caption{Ablations on $g$.}
\label{fig:ablation_group_size}
\end{minipage}
\hfill 
\begin{minipage}[t]{0.27\textwidth}
\centering
\includegraphics[width=\linewidth]{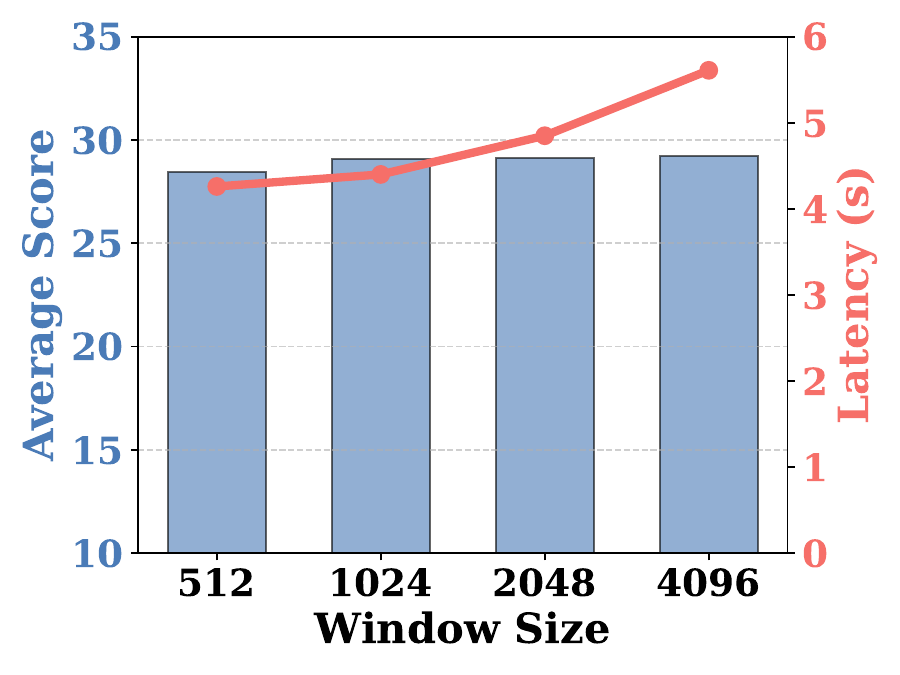}
\caption{Ablations on $w$.}
\label{fig:ablation_window_size}
\end{minipage}
\end{figure*}

\begin{table*}[t]
\centering
\small
\caption{Ablation on the number of queries used to compute importance scores. Default setting is 16.}
\begin{tabular}{c|cccccc|c}
\hline
\#Queries & S. QA & M. QA & Sum. & F. S. & Syn. & Code & Avg. \\
\hline
32 & 23.01 & 8.82 & 23.87 & 58.33 & 2.41 & 59.09 & 29.26 \\
16 (default) & 22.61 & 8.90 & 23.74 & 58.27 & 2.33 & 58.67 & 29.09 \\
8 & 22.34 & 8.37 & 23.17 & 56.22 & 2.09 & 58.41 & 28.43 \\
\hline
\end{tabular}
\label{tab:ablation_num_query}
\end{table*}

\subsection{Performance Comparisons}
\textbf{Comparisons on Long Context Modeling}. In Table~\ref{tab:longbench}, our \sexyname achieves an average score of 29.09 on LongBench-E, nearly matching the 29.51 of the original MLA while achieving a \textbf{1.8$\times$} speedup at 64K context. Notably, \sexyname outperforms MLA on the code generation (58.67 vs. 58.17), demonstrating its capability to preserve critical information. In contrast, existing sparse attention methods suffer significant performance degradation, with Minference and FlexPrefill dropping to 19.71 and 21.05, respectively, due to their sparsification strategy that may discard essential information.

In Table~\ref{tab:ruler}, \sexyname maintains strong performance across all lengths on RULER, achieving the highest scores at 32K, 64K, and 128K contexts. At 128K, \sexyname attains 24.38, surpassing MLA's 23.96 while reducing latency by \textbf{2.5$\times$}. This demonstrates that our latent-space condensation effectively preserves long-range dependencies even at extreme lengths. Notably, while other methods show severe performance degradation at longer contexts (\textit{e.g.}, Minference dropping to 4.34 at 128K), \sexyname maintains stable performance, highlighting the advantage of direct latent-space condensation over reconstruction-based sparsification.

\noindent\textbf{Comparisons on Short‑Context Tasks}.
To verify the gains do not compromise performance on standard tasks, we evaluate \sexyname on three widely-used short-context benchmarks. In Table~\ref{tab:short_tasks}, \sexyname achieves nearly identical performance to the original MLA across all three benchmarks. This demonstrates that the latent-space condensation introduced by \sexyname does not harm the model's capability on short-context tasks.

\subsection{Comparisons on Computational Efficiency}
In Figure~\ref{fig:speed_memory}, \sexyname achieves consistent and scalable efficiency gains across various context lengths. It reduces prefilling latency from short to long contexts, achieving \textbf{2.5$\times$} speedup over MLA at 128K, while Minference and FlexPrefill only accelerate prefilling beyond 64K. During decoding, \sexyname maintains low per-token latency (reducing by \textbf{1.8$\times$} at 128K) and reduces KV cache by \textbf{90\%} (from 10.13 GB to 0.71 GB), whereas Minference and FlexPrefill cannot reduce the KV cache. KDA suffers severe performance degradation (Tables~\ref{tab:longbench} and \ref{tab:ruler}) despite maintaining a small cache, \sexyname preserves competitive performance alongside its efficiency gains. 

In Figures~\ref{fig:latency_triton}, we compare the latency of our custom Triton kernel with the PyTorch implementation. Our kernel achieves consistently lower latency across all context lengths, with up to \textbf{24.4$\times$} speedup at 64K. While the PyTorch implementation encounters OOM beyond 64K, our kernel scales efficiently to \textbf{1M tokens} with feasible latency and linear memory growth. This optimization ensures that the efficiency advantages of \sexyname are fully realized in practice.

\subsection{Scalability and Generalization}\label{sec:generality}
\revised{We evaluate \sexyname's broader applicability across different model scales and attention variants.}

\revised{\textbf{Generalization to Other Model Scales}. We apply \sexyname to MiniCPM3-4B~\cite{hu2024minicpm}, another publicly available MLA-based model. As shown in Table~\ref{tab:minicpm3}, \sexyname achieves a 2.2× prefilling speedup and 93\% KV cache reduction at 128K context, while retaining competitive performance on LongBench‑E. These results demonstrate that \sexyname remains effective across different model scales. The consistent behavior across two models of different sizes suggests that the condensation mechanism primarily depends on latent sequence redundancy.}

\revised{\textbf{Extension to Other Attention Variants}. \sexyname is fundamentally architecture-agnostic. We adapt its dual-path condensation strategy to standard grouped-query attention (GQA) by applying max selection to keys (preserving RoPE positional information) and weighted pooling to values. We evaluate this adaptation on DeepSeek-R1-Distill-Qwen-7B~\cite{guo2025deepseekr1}, a representative thinking model that employs GQA, on the OlympiadBenchMath benchmark. As shown in Table~\ref{tab:gqa_model}, \sexyname delivers comparable accuracy while achieving a 3.25× inference speedup and 93\% KV cache reduction at 128K context. This confirms that \sexyname extends effectively  to other attention variants.}

\subsection{Ablations}
All ablations are evaluated on the LongBench-E benchmark, reporting both the average score and the first-token latency at 128K context length.

\noindent\textbf{Effect on Pooling Strategies}. We investigate the pooling strategies for generating the representative latent $\mathbf{c}^{\text{rep}}$ and positional anchor $\mathbf{k}^{R_{\text{rep}}}$. In Table~\ref{tab:ablation_pooling}, the combination of \textbf{attention-weighted pooling} for $\mathbf{c}^{\text{rep}}$ and \textbf{max selection} for $\mathbf{k}^{R_{\text{rep}}}$ achieves the highest score. This validates our design: weighted pooling preserves semantic information from all tokens in the group, while max selection avoids distorting the nonlinear positional encoding.

\noindent\textbf{Effect on Group Size.} We investigate the impact of the group size $g$ on model performance and efficiency by varying $g$ in $\{4, 8, 16, 32\}$ while keeping the local window size fixed at $w = 1024$. As shown in Figure~\ref{fig:ablation_group_size}, smaller group sizes yield higher average scores on LongBench-E. This improvement stems from finer-grained condensation, which preserves more detailed semantic information within each group. However, this comes at the cost of increased computational latency.
To balance efficiency and performance, we choose $g = 16$ as the default setting for all subsequent experiments, which provides a favorable trade-off.

\noindent\textbf{Effect on Window Size.} We evaluate the influence of the local window size $w$ by varying it across $\{512, 1024, 2048, 4096\}$, with the group size fixed at $g = 16$. Larger windows preserve more local context and improve accuracy, yet also increase latency, as shown in Figure~\ref{fig:ablation_window_size}. We select $w = 1024$ as the default, offering a good compromise between preserving fine-grained local dependencies and maintaining inference efficiency.

\revised{\noindent\textbf{Effect on the Number of Summary Queries.} We ablate the number of queries used to compute the summary query \(\bar{\mathbf{q}}\) (Eq. (6)) while fixing \(g=16\) (Table~\ref{tab:ablation_num_query}). Using 32 queries slightly improves the average score from 29.09 to 29.26, whereas using 8 queries degrades it to 28.43. The default choice of \(g=16\) thus offers a robust trade-off, as the last \(g\) tokens already capture the immediate decoding focus with little gain from a larger window.}

\section{Conclusion}
In this paper, 
we propose \sexyname, an efficient attention mechanism that performs structured context condensation directly within MLA's latent space. \sexyname explicitly handles the disentangled semantic and positional components in MLA: aggregating semantic latents via query-aware pooling while preserving precise positional information through anchor selection. This approach jointly reduces KV cache size and attention complexity without introducing additional parameters.
Theoretically, we prove that the approximation error of our method is uniformly bounded, independent of context length. Extensive experiments validate the effectiveness of \sexyname, demonstrating significant efficiency gains (up to \textbf{2.5$\times$} prefilling speedup and \textbf{90\%} KV cache reduction at 128K context) while maintaining competitive model performance. 

\section*{Limitations}
While \sexyname provides a general design, its implementation requires a custom kernel to achieve promising efficiency, which may involve additional engineering effort for deployment in new frameworks. The current implementation of \sexyname is optimized for common bfloat16/float16 precision settings, and its kernel-level optimizations have not been extensively explored for lower-precision formats (\textit{e.g.}, int8 quantization). 
These are typical engineering and evaluation scoping considerations that do not affect the core algorithmic contribution, and they point to straightforward extensions in future work.

\revised{Beyond engineering considerations, \sexyname exhibits a modest accuracy degradation in tasks demanding precise token-level retrieval under aggressive condensation.
As analyzed in Appendix~\ref{app:failure_case}, multi-document question answering scenarios where attention concentrates sharply on few tokens can attenuate critical signals. This trade-off is inherent to condensation-based methods and highlights the importance of matching the compression strength to the deployment context.}

\section*{Information About Use Of AI Assistants}
During the preparation of this work, we used large language models (LLMs) only for text polishing and grammar correction. All central research ideas, methodological designs, experimental implementations, and result interpretations are the original contributions of the authors. The AI assistants were not involved in any core intellectual content.

\section*{Acknowledgments}
This work was supported in part by the Joint Funds of the National Natural Science Foundation of China under Grant No.U24A20327, the Guangdong S\&T Program under Grant No.2026B0101110001, the GuangDong Basic and Applied Basic Research Foundation under Grant No.2026A1515010388, the Postdoctoral Fellowship Program of CPSF under Grant GZC20251043, and the project of 62536003, PCL2025A14.

\bibliography{example_paper}
\clearpage

\appendix

In the supplementary, we provide a detailed proof, more details on the \sexyname method.
We organize our supplementary as follows.

In Section~\ref{sec:theory}, we provide the detailed proof for Proposition~\ref{lem:optimal-pooling} and Theorem~\ref{thm:uniform-bound}. 

In Section~\ref{app:complexity}, we analysis the computation and memory complexity of \sexyname in detail.

In Section~\ref{app:exp_details}, we provide more experimental details of \sexyname. 

\section{Theoretical Analysis}
\label{sec:theory}

\subsection{Proof of Proposition~\ref{lem:optimal-pooling}}
\label{app:proof-pooling}

We provide a complete and self-contained proof of Proposition~\ref{lem:optimal-pooling}, which establishes the optimality of attention-weighted pooling for latent-space condensation.

\noindent \textbf{Proposition \ref{lem:optimal-pooling}} (\textbf{Optimal Condensation via Weighted Pooling})
\emph{Let \(\{\mathbf{c}_i \in \mathbb{R}^{d_c}\}_{i=1}^g\) be a set of latent vectors and \(\boldsymbol{\alpha} \in \mathbb{R}^{g}\) a probability distribution over them, with \(\alpha_i \ge 0\) and \(\sum_i \alpha_i = 1\). 
Consider the expected squared reconstruction error
\[
\mathcal{L}(\mathbf{c}^{\text{rep}}) = \mathbb{E}_{i \sim \boldsymbol{\alpha}} \bigl[ \|\mathbf{c}_i - \mathbf{c}^{\text{rep}}\|_2^2 \bigr] = \sum_{i=1}^g \alpha_i \|\mathbf{c}_i - \mathbf{c}^{\text{rep}}\|_2^2 .
\]
The vector \(\mathbf{c}^{\text{rep}}\) that minimizes \(\mathcal{L}\) is uniquely given by the convex combination \(\mathbf{c}^{\text{rep}} = \sum_{i=1}^g \alpha_i \mathbf{c}_i.\)
}

\begin{proof}
First, expand the squared Euclidean norm:
\begin{equation*}
\begin{split}
\|\mathbf{c}_i - \mathbf{c}^{\text{rep}}\|_2^2 &= (\mathbf{c}_i - \mathbf{c}^{\text{rep}})^\top (\mathbf{c}_i - \mathbf{c}^{\text{rep}}) \\ 
&= \mathbf{c}_i^\top \mathbf{c}_i - 2 \mathbf{c}_i^\top \mathbf{c}^{\text{rep}} + \mathbf{c}^{\text{rep}\top} \mathbf{c}^{\text{rep}}.
\end{split}
\end{equation*}
Substituting into the loss function yields
\begin{equation*}
    \begin{split}
\mathcal{L}(\mathbf{c}^{\text{rep}}) \small{=} \sum_{i=1}^g \alpha_i \mathbf{c}_i^\top \mathbf{c}_i &\small{-} 2 \Bigl(\sum_{i=1}^g \alpha_i \mathbf{c}_i\Bigr)^\top \mathbf{c}^{\text{rep}} \\
&\small{+} \Bigl(\sum_{i=1}^g \alpha_i\Bigr) \mathbf{c}^{\text{rep}\top} \mathbf{c}^{\text{rep}}.
    \end{split}
\end{equation*}
Since \(\sum_i \alpha_i = 1\), the last term simplifies to \(\mathbf{c}^{\text{rep}\top} \mathbf{c}^{\text{rep}}\). Define the weighted mean \(\bar{\mathbf{c}} = \sum_{i=1}^g \alpha_i \mathbf{c}_i\). Then, we have
\[
\mathcal{L}(\mathbf{c}^{\text{rep}}) = \sum_{i=1}^g \alpha_i \mathbf{c}_i^\top \mathbf{c}_i \;-\; 2 \bar{\mathbf{c}}^\top \mathbf{c}^{\text{rep}} \;+\; \mathbf{c}^{\text{rep}\top} \mathbf{c}^{\text{rep}}.
\]

To find the minimizer, we compute the gradient with respect to \(\mathbf{c}^{\text{rep}}\):
\[
\nabla_{\mathbf{c}^{\text{rep}}} \mathcal{L} = -2 \bar{\mathbf{c}} + 2 \mathbf{c}^{\text{rep}}.
\]
Setting the gradient to zero gives the necessary condition:
\[
\mathbf{c}^{\text{rep}} = \bar{\mathbf{c}} = \sum_{i=1}^g \alpha_i \mathbf{c}_i.
\]

To confirm that this stationary point is indeed a minimum, we examine the Hessian matrix:
\[
\nabla_{\mathbf{c}^{\text{rep}}}^2 \mathcal{L} = 2 \mathbf{I}_{d_c},
\]
where \(\mathbf{I}_{d_c}\) is the \(d_c \times d_c\) identity matrix. This Hessian is positive definite for all \(\mathbf{c}^{\text{rep}}\), implying that the function \(\mathcal{L}\) is strictly convex. Thus, the point \(\mathbf{c}^{\text{rep}} {=} \sum_i \alpha_i \mathbf{c}_i\) is the unique global minimizer.
\end{proof}

\begin{remark}
In \sexyname, for a group \(G_j\) of size \(g\), we treat the normalized attention scores \(\alpha_i^{(j)}\) (Eq.~\eqref{eq:importance-score}) as the probability distribution \(\boldsymbol{\alpha}\). The vectors \(\mathbf{c}_i^{KV}\) are the latent representations of the tokens in the group. Applying Proposition~\ref{lem:optimal-pooling} demonstrates that the weighted pooling operation $\mathbf{c}_j^{\text{rep}} {=} \sum_{i \in G_j} \alpha_i^{(j)} \mathbf{c}_i^{KV}$ is the optimal choice for minimizing the expected reconstruction error of the group's latent contributions. This theoretical guarantee ensures that our condensation procedure maximally preserves the semantic information of the group, as measured by mean squared error.
\end{remark}

\paragraph{Comparison with Alternative Pooling Strategies.}
Other common pooling strategies include \emph{mean pooling} (\(\mathbf{c}^{\text{rep}} = \frac{1}{g}\sum_i \mathbf{c}_i\)) and \emph{max pooling} (selecting the vector with the highest norm or a heuristic score). Mean pooling is a special case of our weighted pooling where \(\alpha_i = 1/g\) for all \(i\), which implicitly assumes all tokens are equally important—an assumption that rarely holds in natural language. Max pooling, while preserving the most salient single vector, discards information from other tokens entirely and can be unstable. Our attention-weighted pooling generalizes mean pooling by incorporating token‑specific importance, and it avoids the information‑discarding drawback of max pooling. Proposition~\ref{lem:optimal-pooling} shows that when the importance scores \(\alpha_i\) accurately reflect relevance, weighted pooling is objectively optimal in the least‑squares sense.

\subsection{Proof of Theorem~\ref{thm:uniform-bound}}\label{app:thm_proof}

We theoretically analyze the uniform error bound for \sexyname in Theorem~\ref{thm:uniform-bound}. The argument closely follows the structure of the original theorem but employs the notation introduced in Section~\ref{sec:theory} that reflects the actual grouping and condensation performed by \sexyname.

\textbf{Notations.}
For a query at position $t$, partition the preceding tokens into the recent window $R=\{t-w+1,\dots,t\}$ and the distant history $H=\{1,\dots,t-w\}$. The distant history is further divided into $m$ contiguous groups $G_1,\dots,G_m$, each of size $g$ (except possibly the last). For a group $G_j$, \sexyname computes a representative key $\mathbf{k}_j^{\mathrm{rep}}$ and a representative value $\mathbf{v}_j^{\mathrm{rep}}$ via the condensation procedure described in Section~\ref{sec: Attention with Condensed Context}. For a token $i\in H$, we denote by $G(i)$ the index of the group containing $i$.

The original MLA attention at position $t$ is
\begin{equation*}
    \mathbf{Attn}_t = \sum_{i=1}^{t} p_i \mathbf{v}_i,~~~~~ p_i = \frac{e^{a_i}}{Z},
\end{equation*}
where $a_i = \frac{\mathbf{q}_t^\top\mathbf{k}_i}{\sqrt{d_h}},\; Z=\sum_{j=1}^{t} e^{a_j}.$

In \sexyname, the same query is compared with the representative keys for distant tokens and with the original keys for recent tokens. Hence the attention weights are defined by
\[
\mathbf{Attn}_t^{\mathrm{LCA}} = \sum_{i\in R} p'_i \mathbf{v}_i \;+\; \sum_{j=1}^{m} \beta_j \mathbf{v}_j^{\mathrm{rep}},
\]
where the logits for recent tokens are unchanged, \ie,
$b_i = a_i \;(i\in R)$,
and for a distant group $G_j$ we use the single representative logit $b_j^{\mathrm{rep}} = \frac{\mathbf{q}_t^\top\mathbf{k}_j^{\mathrm{rep}}}{\sqrt{d_h}}$.
The corresponding weights are
\begin{align*}
    &p'_i = \frac{e^{b_i}}{Z'}\;(i\in R),\qquad 
\beta_j = \frac{e^{b_j^{\mathrm{rep}}}}{Z'} \\ 
&Z' = \sum_{i\in R} e^{b_i} + \sum_{j=1}^{m} e^{b_j^{\mathrm{rep}}}.
\end{align*}

To compare the two outputs directly, it is convenient to introduce for every distant token $i\in H$ via the {group‑assigned} key and value:
\[
\tilde{\mathbf{k}}_i = \mathbf{k}_{G(i)}^{\mathrm{rep}},\qquad 
\tilde{\mathbf{v}}_i = \mathbf{v}_{G(i)}^{\mathrm{rep}}.
\]
For recent tokens $i\in R$ we simply set $\tilde{\mathbf{k}}_i = \mathbf{k}_i$, $\tilde{\mathbf{v}}_i = \mathbf{v}_i$. With this convention we can write the \sexyname output compactly as
\[
\mathbf{Attn}_t^{\mathrm{LCA}} = \sum_{i=1}^{t} p'_i \tilde{\mathbf{v}}_i,
\]
where $p'_i$ for $i\in H$ is defined as $p'_i = \beta_{G(i)}/|G(i)|$ (so that $\sum_{i\in G_j} p'_i = \beta_j$). The deviation hypotheses of the theorem are then 
\[
\|\mathbf{k}_i - \tilde{\mathbf{k}}_i\|_2 \le \delta_k,~ 
\|\mathbf{v}_i - \tilde{\mathbf{v}}_i\|_2 \le \delta_v, ~(i=1,\dots,t).
\]

\noindent \textbf{Theorem \ref{thm:uniform-bound}}
(\textbf{Uniform Error Bound})
\emph{
Fix a query $\mathbf{q}_t$ with $\|\mathbf{q}_t\|_2 \le Q$ and assume $\|\mathbf{v}_i\|_2 \le V$ for all values. For each distant-history group $G_j$, let $\mathbf{k}_j^{\mathrm{rep}},\mathbf{v}_j^{\mathrm{rep}}$ be the representative key and value that \sexyname uses for every token $i\in G_j$, and let $\delta_k,\delta_v$ be uniform bounds on the per-token key and value deviations: $\|\mathbf{k}_i - \mathbf{k}_j^{\mathrm{rep}}\|_2 \le \delta_k,
\|\mathbf{v}_i - \mathbf{v}_j^{\mathrm{rep}}\|_2 \le \delta_v.$
Then the $\ell_2$ error between the full MLA attention and \sexyname satisfies
\[
\big\| \mathbf{Attn}_t - \mathbf{Attn}_t^{\mathrm{LCA}} \big\|_2
\;\le\; V\big(e^{2Q\delta_k/\sqrt{d_h}}-1\big) \;+\; \delta_v.
\]
}

\begin{proof}
Define the total error $E := \mathbf{Attn}_t - \mathbf{Attn}_t^{\mathrm{LCA}}$. A straightforward algebraic manipulation gives the decomposition:
\begin{equation*}
E = \underbrace{\sum_{i=1}^{t} (p_i - p'_i)\mathbf{v}_i}_{\text{(I)}} \;+\; \underbrace{\sum_{i=1}^{t} p'_i(\mathbf{v}_i - \tilde{\mathbf{v}}_i)}_{\text{(II)}}.
\end{equation*}
We next bound the norms of (I) and (II) separately.

1) We start by bounding the first term. First, we bound the logit perturbations. Using the Cauchy–Schwarz inequality and the given bounds on the query norm and key approximations,
\begin{align*}
    |a_i - b_i| &= \frac{|\mathbf{q}_t^\top(\mathbf{k}_i - \tilde{\mathbf{k}}_i)|}{\sqrt{d_h}}
\le \frac{\|\mathbf{q}_t\|_2 \|\mathbf{k}_i - \tilde{\mathbf{k}}_i\|_2}{\sqrt{d_h}} \nonumber\\
&\le \frac{Q\delta_k}{\sqrt{d_h}} =: \varepsilon,
\end{align*}
where $b_i$ denotes the logit used by \sexyname for token $i$ (i.e., $b_i = a_i$ for $i\in R$ and $b_i = b_{G(i)}^{\mathrm{rep}}$ for $i\in H$). 
Hence each logit changes by at most $\varepsilon$.

Thus, we have $e^{-\varepsilon}e^{b_i} \le e^{a_i} \le e^{\varepsilon} e^{b_i}$ for every $i$. Summing these inequalities over all $i$ yields
\[
e^{-\varepsilon} \sum_{j=1}^{t} e^{b_j} \;\le\; \sum_{j=1}^{t} e^{a_j} \;\le\; e^{\varepsilon} \sum_{j=1}^{t} e^{b_j}.
\]
Denote $Z = \sum_j e^{a_j}$ and $Z' = \sum_j e^{b_j}$. The previous chain implies $e^{-\varepsilon}Z' \le Z \le e^{\varepsilon}Z'$. Combining this with the pointwise bound $e^{-\varepsilon}e^{b_i} \le e^{a_i}$ gives
\[
p_i = \frac{e^{a_i}}{Z} \ge \frac{e^{-\varepsilon}e^{b_i}}{e^{\varepsilon}Z'} = e^{-2\varepsilon} p'_i.
\]
Similarly, $p_i \le e^{2\varepsilon} p'_i$. Therefore,
\begin{equation}\label{A3}
    e^{-2\varepsilon} p'_i \le p_i \le e^{2\varepsilon} p'_i \qquad\text{for all }i.
\end{equation}
From Eqn. (\ref{A3}), we obtain $|p_i - p'_i| \le (e^{2\varepsilon} - 1) p'_i$. Summing this inequality over $i$ provides a bound on the $\ell_1$ distance between the two weight vectors:
\begin{equation}\label{A4}
    \sum_{i=1}^{t} |p_i - p'_i| \;\le\; (e^{2\varepsilon} - 1) \sum_{i=1}^{t} p'_i = e^{2\varepsilon} - 1.
\end{equation}

Using the uniform bound $\|\mathbf{v}_i\|_2 \le V$ together with Eqn. (\ref{A4}),
\begin{align}\label{A5}
    &\big\| \sum_{i=1}^{t} (p_i - p'_i)\mathbf{v}_i \big\|_2
\le \sum_{i=1}^{t} |p_i - p'_i| \cdot \|\mathbf{v}_i\|_2  \nonumber\\
\le & V \sum_{i=1}^{t} |p_i - p'_i| 
\le V\big(e^{2\varepsilon} - 1\big).
\end{align}

2) We bound the second error component. Given
$\|\mathbf{v}_i - \tilde{\mathbf{v}}_i\|_2 \le \delta_v$ and the fact that $p'_i$ form a probability distribution, we have
\begin{align}\label{A6}
    &\big\| \sum_{i=1}^{t} p'_i (\mathbf{v}_i - \tilde{\mathbf{v}}_i) \big\|_2
\le \sum_{i=1}^{t} p'_i \|\mathbf{v}_i - \tilde{\mathbf{v}}_i\|_2 \nonumber\\
\le& \delta_v \sum_{i=1}^{t} p'_i = \delta_v.
\end{align}
3) Combing Eqn. (\ref{A5}) and (\ref{A6}) gives
\[
\|E\|_2 \;\le\; V\big(e^{2\varepsilon} - 1\big) \;+\; \delta_v,
\]
where $\varepsilon = Q\delta_k / \sqrt{d_h}$.

\end{proof}

\subsection{Empirical Validation of Key and Value Deviations}\label{app:kv_deviations}
\revised{we empirically measure the per-token key and value deviations $\delta_k$ and $\delta_v$ across different context lengths and tasks. For each setting, we randomly sample 128 sequences and compute the ratio of the L2 norm of the deviation between original keys/values and their condensed group representatives to the L2 norm of the original vectors. The results in Tables~\ref{tab:context_stats} and~\ref{tab:task_stats} show that across all lengths and tasks, the average relative deviation remains below 5\% for keys and 4\% for values, confirming that contiguous tokens within a group are indeed semantically similar and that our condensation introduces only small per-token errors. This empirical evidence validates the key assumption in Theorem~\ref{thm:uniform-bound} and demonstrates that the theoretical bound is reasonably tight in practice.}

\begin{table}[t]
\centering
\small
\caption{\revised{Statistics across different context lengths.}}
\label{tab:context_stats}
\begin{tabular}{c|ccc}
\hline
Context Length & 4K & 32K & 128K \\
\hline
$\|\mathbf{k}_i\|_2$ & 20.00 & 22.13 & 21.39 \\
$\delta_k$ & 0.43 & 0.64 & 0.77 \\
$\delta_k / \|\mathbf{k}_i\|_2$ & 2.15\% & 2.89\% & 3.60\% \\
\hline
$\|\mathbf{v}_i\|_2$ & 20.87 & 21.38 & 21.23 \\
$\delta_v$ & 0.68 & 0.73 & 0.77 \\
$\delta_v / \|\mathbf{v}_i\|_2$ & 3.26\% & 3.41\% & 3.63\% \\
\hline
\end{tabular}
\end{table}

\begin{table}[t]
\centering
\small
\caption{\revised{Statistics across different task types.}}
\label{tab:task_stats}
\begin{tabular}{lccc}
\hline
\textbf{Metric} & \textbf{Multi-Doc.\ QA} & \textbf{Summarization} & \textbf{Code} \\
\hline
$\|\mathbf{k}_i\|_2$ & 22.76 & 21.42 & 20.07 \\
$\delta_k$ & 0.67 & 0.98 & 0.89 \\
$\delta_k / \|\mathbf{k}_i\|_2$ & 2.94\% & 4.58\% & 4.43\% \\
\hline
$\|\mathbf{v}_i\|_2$ & 19.09 & 24.78 & 29.78 \\
$\delta_v$ & 0.56 & 0.51 & 0.98 \\
$\delta_v / \|\mathbf{v}_i\|_2$ & 2.93\% & 2.06\% & 3.29\% \\
\hline
\end{tabular}
\end{table}

\section{Complexity Analysis}\label{app:complexity}
We analyze the computational and memory complexity of \sexyname against standard MLA. Recall that $L$ is the sequence length, $w$ is the short‑range window size, and $m$ is the number of long‑range representatives.

\textbf{Computational complexity.}
The overall computation consists of three main steps. 
First, importance scores are computed between a group‑aware query $\bar{\mathbf{q}}$ and all $L$ keys, requiring $\mathcal{O}(L)$ operations. 
Second, for each of the $m$ groups, we compute normalized attention weights (softmax over $g$ scores) and perform weighted pooling of $g$ latent vectors, which together cost $\mathcal{O}(L)$. 
Finally, attention is computed over the condensed context of size $m + w$ for each of the $L$ queries, resulting in $\mathcal{O}\big(L(m+w)\big)$ operations. 
Since $w$ is a constant, the dominant term becomes $\mathcal{O}(Lm)$, lower than the $\mathcal{O}(L^2)$ complexity of standard MLA.

\textbf{Memory complexity of the KV cache.} In \sexyname, we cache only the $m$ representative latent vectors $\{\mathbf{c}_j^{\text{rep}}\}_{j=1}^m$ and their positional anchors $\{\mathbf{k}_j^{R_{\text{rep}}}\}_{j=1}^m$, along with the $w$ recent tokens stored in full. 
Hence the total cache size scales as $\mathcal{O}(m+w) = \mathcal{O}(m)$.

The reduction in both computation and memory enables \sexyname to handle long contexts with significantly lower hardware requirements while preserving essential information through structured latent‑space condensation.

\section{More Experimental Details}\label{app:exp_details}
\subsection{More Details about Evaluation Benchmarks}~\label{app:benchmarks}
We evaluate our method on both long-context and short-context tasks to comprehensively assess its performance and efficiency. 
For long-context understanding, we use two established benchmarks: 
\textbf{LongBench-E}~\cite{bai2023longbench}, a bilingual suite with uniformly distributed context lengths that measures overall context understanding across 21 diverse tasks; 
and \textbf{RULER}~\cite{hsieh2024ruler}, a synthetic benchmark with 13 subtasks (e.g., retrieval, multi-hop reasoning) designed for fine-grained assessment of long-context capabilities.

Furthermore, to verify that our efficiency gains do not compromise performance on standard tasks, we evaluate on three short-context benchmarks: 
\textbf{MMLU}~\cite{hendrycks2021measuring}, which assesses knowledge breadth and reasoning across 57 academic subjects; 
\textbf{GSM8K}~\cite{cobbe2021gsm8k}, a dataset of grade-school math problems requiring multi-step reasoning; 
and \textbf{MBPP}~\cite{austin2021program}, which evaluates code generation ability through crowd-sourced programming tasks.

\subsection{More Implementation Details}~\label{app:implementation}

We implement and evaluate our proposed \sexyname on the DeepSeek-V2-Lite model (16B parameters)~\cite{deepseekai2024deepseekv2}. We choose this model for two primary reasons: 1) it is the first publicly released model that adopts MLA, providing an ideal foundation for our method which specifically targets MLA's efficiency bottlenecks; 2) its moderate size makes it feasible to conduct extensive experiments and latency measurements within constrained GPU resources. We replace the original MLA layers in DeepSeek-V2-Lite with our \sexyname. To ensure high efficiency, we develop a optimized kernel using Triton~\cite{tillet2019triton}. This custom implementation minimizes memory movement and maximizes hardware utilization during both training and inference. To allow the model to adapt to our dynamic condensation mechanism, we perform continued fine-tuning for only 1,000 steps with 2.0B tokens on the per-source-length upsampled SlimPajama dataset~\cite{fu2024data}. Each training sample has a sequence length of 64K tokens. We use a total batch size of 64, implemented with a micro-batch size of 1 and gradient accumulation steps of 8. Training is conducted using the Deepspeed Zero-3 strategy for efficient memory management. The learning rate is set to $5\times10^{-6}$. We set the group size $g = 16$ and the window size $w = 1024$, respectively. These settings are kept fixed when inference across all experiments. All experiments and latency measurements are performed on a server equipped with 8$\times$H200 GPUs.

\subsection{More Details of Compared Methods}~\label{app:baselines}
To evaluate the effectiveness and efficiency of \sexyname, we compare it against several SoTA efficient attention methods. We benchmark against two recent published methods that perform dynamic sparsification: \textbf{FlexPrefill}~\cite{laiflexprefill} and \textbf{MInference}~\cite{jiangminference}. Note that these methods were originally designed for and evaluated on vanilla self-attention. They cannot be directly applied to MLA's low-rank latent space. For a fair comparison, we adapt these methods to work on the \textit{reconstructed} key and value matrices derived from MLA's latents. This ensures functional equivalence but forfeits the opportunity to exploit the compressed latent structure for further gains. \textbf{FlexPrefill} dynamically selects between a Query-Aware pattern and a Vertical-Slash pattern per attention head, adaptively determining which key-value indices are necessary for computation. We use the official implementation and adhere to the recommended hyperparameters: a score ratio $\gamma=0.9$, a threshold $\tau=0.1$, a minimum retained budget of 1,024 tokens, and a block size of 128. \textbf{MInference} determines optimal static sparse patterns per attention head offline, combined with online dynamic adjustment of computation regions. Following the original paper, we perform offline calibration on the DeepSeek-V2-Lite model to obtain a sparse configuration and use it for online inference.

\begin{table*}[tb]
\centering
\small
\caption{\revised{Effect of group size $g$ with fixed window size $w=1024$.}}
\label{tab:group-size}
\begin{tabular}{c|c c c}
\hline
$g$ & LongBench Avg. & RULER Avg. (4K--128K) & RULER@128K \\
\hline
32 & 28.32 & 58.18 & 23.96 \\
16 & 29.09 & 58.80 & 24.38 \\
8  & 29.38 & 59.42 & 24.63 \\
\hline
\end{tabular}
\end{table*}

\begin{table*}[tb]
\centering
\small
\caption{\revised{Effect of window size $w$ with fixed group size $g=16$.}}
\label{tab:window-size}
\begin{tabular}{c | c c c}
\hline
$w$ & LongBench Avg. & RULER Avg. (4K--128K) & RULER@128K \\
\hline
512  & 28.84 & 57.14 & 23.68 \\
1024 & 29.09 & 58.80 & 24.38 \\
2048 & 29.14 & 59.52 & 25.53 \\
\hline
\end{tabular}
\end{table*}

We also compare against the recently proposed \textbf{Kimi Delta Attention (KDA)}~\cite{kimiteam2025kimilinear}, a gated linear attention variant that introduces a fine-grained diagonalized gate for controlling memory decay and positional awareness.
A critical distinction lies in the training requirements: the original KDA is designed to be integrated into the model during pre‑training from scratch, requiring substantial computational resources and large‑scale data to achieve promsing performance.
In contrast, our \sexyname requires only minimal continued fine‑tuning to recover full performance on a pre‑trained base.
To establish a controlled comparison under constrained resources, we apply the same adaptation protocol to KDA as we do for our method.
Specifically, we take the base DeepSeek‑V2‑Lite model, augment it with the additional parameters required by KDA, apply KDA to the reconstructed KV matrices, and fine‑tune it for 1000 steps on the same per‑source‑length upsampled SlimPajama dataset~\cite{fu2024data} used for \sexyname.

\section{More Discussions}
\subsection{Failure Case Analysis}\label{app:failure_case}
\revised{To understand when \sexyname's condensation may underperform, we analysis per-task performance on LongBench in Table~\ref{tab:longbench}.
On summarization tasks, \sexyname achieves 23.74, substantially outperforming the MLA baseline (17.52), confirming that weighted pooling effectively aggregates diffuse semantic content for gist-based comprehension. In contrast, on multi-document QA, which demands precise retrieval of specific facts across documents, \sexyname scores 8.90 compared to MLA's 11.15, a modest gap that reflects the inherent challenge of preserving concentrated token-level signals under aggressive condensation. The gap narrows on single-document QA (22.61 vs. 23.92), where positional anchoring via max selection helps retain critical information within a coherent document. Notably, on the more challenging RULER benchmark at 128K context, \sexyname achieves 24.38, marginally exceeding MLA's 23.96, demonstrating that the combined effect of weighted pooling, max selection, and the local window preserves retrieval accuracy even under extreme long-context demands. Taken together, these results indicate that \sexyname excels when information is diffusely distributed and incurs only minor degradation in retrieval-heavy scenarios.}

\subsection{Sensitivity Analysis of Group Size and Window Size}
\revised{To better understand how the hyperparameters $g$ (group size) and $w$ (window size) influence performance across context lengths and task types, we conduct additional experiments on LongBench and RULER with varying configurations at inference. Tables~\ref{tab:group-size} and~\ref{tab:window-size} report the results.}

\revised{Smaller $g$ consistently improves accuracy, particularly on retrieval-heavy tasks (RULER@128K improves from $23.96$ to $24.63$). This gain comes at increased computational cost, as analyzed in Figure~\ref{fig:ablation_group_size}.
Larger $w$ yields the most pronounced benefits on very long contexts (RULER@128K rises from $23.68$ to $25.53$), whereas improvements on shorter sequences are marginal.}

\revised{These results confirm that optimal hyperparameter choices depend on both context length and task characteristics. Notably, $g$ and $w$ serve as explicit, interpretable control knobs that allow practitioners to adjust the performance, efficiency trade-off according to deployment constraints. The default configuration ($g=16$, $w=1024$) offers a practical balance between strong efficiency gains and competitive accuracy. A fully adaptive mechanism conditioned on context statistics remains a promising direction for future investigation.}

\subsection{Compatibility with Training from Scratch}
\revised{To verify that \sexyname does not rely on pretrained MLA initialization, we conduct a controlled from-scratch experiment on LLaMA2-7B. Both vanilla self-attention and \sexyname are trained for 1,000 steps under identical optimization settings. \sexyname achieves consistently lower training loss throughout, reaching 6.34 after 1,000 steps compared to 6.98 for the vanilla baseline. This indicates that \sexyname is compatible with training from scratch, imposes no optimization instability, and can even accelerate early-stage convergence without architectural modification beyond attention replacement.}

\end{document}